\newtheorem{proposal}{Proposal}
\title{Learning to Recover for\\Safe Reinforcement Learning}
\author{
    Written by AAAI Press Staff\textsuperscript{\rm 1}\thanks{With help from the AAAI Publications Committee.}\\
    AAAI Style Contributions by Pater Patel Schneider,
    Sunil Issar,\\
    J. Scott Penberthy,
    George Ferguson,
    Hans Guesgen,
    Francisco Cruz\equalcontrib,
    Marc Pujol-Gonzalez\equalcontrib
}
\begin{document}

\maketitle

\begin{abstract}
Safety controllers is widely used to achieve safe reinforcement learning. Most methods that apply a safety controller are using handcrafted safety constraints to construct the safety controller. However, when the environment dynamics are sophisticated, handcrafted safety constraints become unavailable. Therefore, it worth to research on constructing safety controllers by learning algorithms. We propose a three-stage architecture for safe reinforcement learning, namely TU-Recovery Architecture. A safety critic and a recovery policy is learned before task training. They form a safety controller to ensure safety in task training. Then a phenomenon induced by disagreement between task policy and recovery policy, called adversarial phenomenon, which reduces learning efficiency and model performance, is described. Auxiliary reward is proposed to mitigate adversarial phenomenon, while help the task policy to learn to recover from high-risk states. A series of experiments are conducted in a robot navigation environment. Experiments demonstrate that TU-Recovery outperforms unconstrained counterpart in both reward gaining and constraint violations during task training, and auxiliary reward further improve TU-Recovery in reward-to-cost ratio by significantly reduce constraint violations.
\end{abstract}

\section{Introduction}

Reinforcement learning has been widely believed as key path to general artificial intelligence. 
In the past decade, deep reinforcement learning has made great achievements in many fields.
However, most of the successful reinforcement learning applications have been confined to virtual domains, 
such as video games, board games, and virtual robot controls, 
while the application of reinforcement learning in real-world is still a difficult problem that worthy of attention.
Safety is one of the main obstacles to the real-world application of reinforcement learning.
According to traditional reinforcement learning framework, agents are encouraged to explore environments, 
which is crucial to performance improvements.
What prevent reinforcement learning algorithms from being safe is exactly the exploration procedures in these algorithms, 
because any freely exploring action could put an agent into catastrophe if the environment contains danger areas.
Therefore, the exploratory nature of reinforcement learning contradicts safety requirements in lots of real-world applications, 
and how to balance exploration and safety -- that is, make RL agents gain performance improvements continuously while meet environment constraints -- 
is a tough but critical problem to safe RL.
Recently in some high-safety required area, such as autonomous driving, AI medical treatment, and robot navigation, 
researchers are showing growing interest in safe reinforcement learning.

Typically, safe reinforcement learning is achieved by adding constraints to the original optimization problem.
These constraints are usually characterized by handcrafted features. However, handcrafted features have their own disadvantages. 
The first disadvantage is that constraints constructed with handcrafted features are difficult, or even impossible to establish, 
when the environment dynamics operate in a complex way. The second disadvantage is that, 
it requires human-level prior knowledge to make handcrafted features. In some cases, no prior knowledge exists, 
which means building handcrafted features is infeasible. 
So in these scenarios, building safety constraints through learning algorithms is necessary. 
It is worth noting that, safety constraints need not to be explicit formulas. 
One can use a safe controller to supervise agent's exploration procedure, so that safety constraints are implicitly expressed by the safe controller.

This work mainly focuses on dealing with complex safety constraints. We first train a safety critic, and use this critic 
to train a task-unaware safe controller. Then task training is performed, under the supervision of safe controller.
We find that task actions, which proposed by task agent in order to maximize expected reward, and safe actions, which proposed by safe controller in order to ensure safety, 
are sometimes against each other. We call this phenomenon as "adversarial phenomenon". 
We come up with an idea that use auxiliary rewards to handle the adverse effects bring by this phenomenon. 
Contributions of this paper are listed below: 

\begin{itemize}
    \item We propose a new architecture -- TU-Recovery Architecture -- for safe reinforcement learning, where safety constraints and safety controllers are learned in advance of task learning procedure, and are used to guide the exploration of task agents.
    \item We propose to utilize auxiliary rewards to mitigate the adversarial phenomenon during task training. These auxiliary rewards also help task agents learn to take safe actions when situations get bad.
    \item We test the proposed methods in a robot navigation environment, and show that TU-Recovery outperforms unconstrained RL algorithms. Furthermore, we show that auxiliary reward improve TU-Recovery by significantly reduce constraint violations.
\end{itemize}


\section{Preliminaries}

Markov Decision Process (MDP) is an important model in the field of RL. A MDP can be represented by a six-tuple, $(\mathcal{S},\mathcal{A},P,r,\rho_0,\gamma)$, where $\mathcal{S}$ is the space of all possible states, $\mathcal{A}$ is the space of all possible actions, $P:\mathcal{S} \times \mathcal{A} \times \mathcal{S} \rightarrow [0,1]$ is the transition probability function that determines the probability of transitioning to another state after taking an action from one state, $r:\mathcal{S} \rightarrow \mathbb{R}$ is the reward function that indicates the reward of being in a state, $\rho_0: \mathcal{S} \rightarrow [0,1]$ is the distribution of initial states, and $\gamma$ is a discount factor. A stationary policy in this MDP can be represented by $\pi : \mathcal{S} \times \mathcal{A} \rightarrow [0,1]$, where $\pi(a\vert s)$ is the probability of take action $a$ under state $s$.

In traditional RL framework, an agent needs to learn a policy that maximize the expected discounted reward. That is, an agent aims to solving the following optimization problem:

\begin{align*}
    \mathop{argmax}\limits_{\pi} \quad \mathop{\mathbb{E}}\limits_{\tau \sim \pi} \left[ \sum_{t=1}^{\infty} \gamma^t r_{t} \right],
\end{align*}
where $\tau=(s_{0},a_{0},s_{1},a_{1},...)$ represent one trajectory generated by the interaction between the agent and the environment, $r_{t}$ is a simplified denotation of $r(s_{t})$, and by $\tau \sim \pi$ we mean that the distribution of trajectories follows the policy $\pi$.

Value functions are important notions in many RL algorithms. One value function, known as state value function, can be denoted as $V_{\pi}(s)=\mathop{\mathbb{E}}\limits_{\tau \sim \pi}\left[ \sum_{t=0}^{\infty}\gamma^{t} r_{t} \vert s_{0}=s \right]$. Another value function, known as state-action value function, is denoted as $Q_{\pi}(s,a)=\mathop{\mathbb{E}}\limits_{\tau \sim \pi}\left[ \gamma^{t} r_{t} \vert s_{0}=s,a_{0}=a \right]$.

Different from traditional RL, safe RL is usually modeled as Constrained Markov Decision Process (CMDP), which is an extension of MDP. A CMDP can be represented by a seven-tuple of the form $(\mathcal{S},\mathcal{A},P,r,c,\rho_0,\gamma)$, where the definitions of $\mathcal{S}$, $\mathcal{A}$, $P$, $r$, $\rho_0$ and $\gamma$ are the same as in MDP, and $c:\mathcal{S} \rightarrow \mathbb{R}$ is the cost function, which indicates the cost of being in a state.

In the context of safe RL, an agent considers both maximizing expected discounted reward and satisfying safety constraints. Typically, a safety constraint can be characterized by the expected discounted cost, $\mathop{\mathbb{E}}\limits_{\tau \sim \pi} \left[ \sum_{t=1}^{\infty} \gamma^t c_{t} \right]$. Therefore, the following constrained optimization problem is what the agent aims to solve:

\begin{align*}
    \mathop{argmax}\limits_{\pi} \quad & \mathop{\mathbb{E}}\limits_{\tau \sim \pi} \left[ \sum_{t=1}^{\infty} \gamma^t r_{t} \right], \\
    s.t. \quad & \mathop{\mathbb{E}}\limits_{\tau \sim \pi} \left[ \sum_{t=1}^{\infty} \gamma^t c_{t} \right] \leq d,
\end{align*}
where $d$ is a hyperparameter. In CMDP, cost value functions can be defined in the same form as value functions. In other words, we define state cost value function $V_{\pi}^{c}(s)$ and state-action cost value function $Q_{\pi}^{c}(s,a)$ in analogy to $V_{\pi}(s)$ and $Q_{\pi}(s,a)$ respectively, using cost, instead of reward, in the definition.

We assume that the cost function is an indicator function. The state space is split in to two disjoint subsets, safe state set and unsafe state set, denoted as $\mathcal{S}_{safe}$ and $\mathcal{S}_{unsafe}$ respectively. The cost function is written as:

\begin{equation}
    \label{equ:CostSignal}
    c(s) = \mathbf{1}_{\mathcal{S}_{unsafe}}(s)
\end{equation}

One way to safe RL is utilizing a safety controller. Figure \ref{fig:Safe_RL} shows a general framework for safety controller guided safe RL. According to this framework, the agent does not directly interact with the environment; Actions proposed by the agent is input into a safety controller, which always outputs safe actions. Suppose that the agent proposes an action, $a_{task}$, which is only in consider of maximizing the expected discounted reward. The safety controller can be thought as a function that map the entire action space to the safe action space, denoted as $\Phi$. So the actual action to be performed is $a_{safe}=\Phi(a_{task})$.

The design of safety controller is a crucial part in this framework. A good design of the safety controller can greatly improve the performance in gaining reward and satisfying safety constraints. 

\begin{figure}[t]
\centering
\includegraphics[width=0.9\columnwidth]{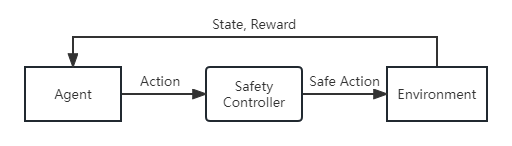}
\caption{A Typical Safe Reinforcement Learning Framework with Safety Controller}
\label{fig:Safe_RL}
\end{figure}

\section{Related Work}

\begin{figure*}[t]
\centering
\includegraphics[width=0.9\textwidth]{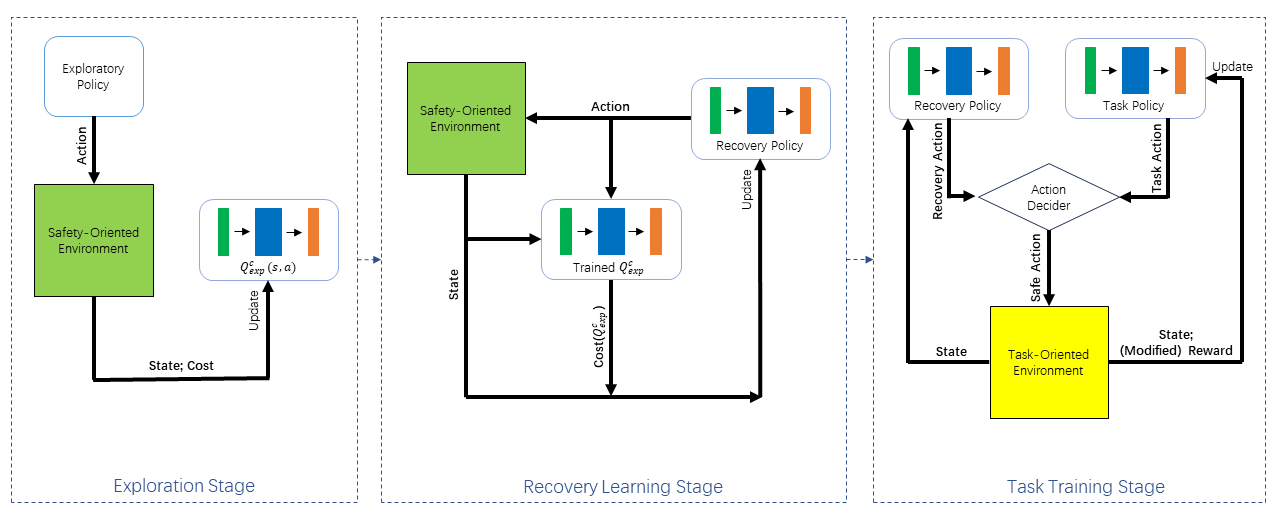}
\caption{Architecture of Safe Reinforcement Learning with Safety Critic and Recovery Policy -- TU-Recovery Architecture.}
\label{fig:Architecture}
\end{figure*}

Recently, there are some researches about using controllers for safe reinforcement learning. The idea is that the exploration process is modified, and the scope of exploration is limited to the safe area, because of the existence of safety controllers; This limitation to the exploration area can prevent the agent from entering dangerous areas.

Shielding RL \cite{DBLP:conf/aaai/AlshiekhBEKNT18} is one of the algorithms use a safety controller. They use prior knowledge, which is modeled by an abstract MDP, to construct a shield, and use this shield to supervise the agent's exploration. In fact, shield is one kind of safety controller. RL-CBF \cite{DBLP:conf/aaai/ChengOMB19} make the assumption that environment dynamics is linear combination of nominal dynamics and unknown dynamics, and the unknown dynamics is modeled as Gaussian process. Optimization problems are constructed using barrier functions. They define a controller to project unsafe actions into safe action space, by solving these optimization problems. Also, safety layer \cite{DBLP:journals/corr/abs-1801-08757} is proposed to play the role of safety controller. They approximate the cost function to the first order of action, then they establish a safety layer to solve a quadratic program problem to project actions into safe action space. SEditor \cite{DBLP:journals/corr/abs-2201-12427} is an extended version of safety layer. In their work, projection from whole action space to safe action space is implement by a learned policy.

There are some algorithms in which the ideas are similar to implement a safety controller, but they do not explicitly define a safety controller. Leave-no-Trace \cite{DBLP:journals/corr/abs-1711-06782} learns a task policy and a reset policy simultaneously. The reset policy is used to provide safety aborts and restore the agent to initial states when task policy is about to enter dangerous states. Recovery RL \cite{DBLP:journals/corr/abs-2010-15920} also learn two policies -- task policy and recovery policy. If an action proposed by task policy is considered as dangerous action, agent will execute the action proposed by recovery policy. DESTA \cite{DBLP:journals/corr/abs-2110-14468} has similar idea to Recovery RL -- it utilizes a task policy and a safety policy. An impulse controller is used to choose between the task action and the safety action.

Our safe RL architecture is greatly inspired by Leave-no-Trace and Recovery RL, but with recovery policy learned before task training. Furthermore, we train the recovery policy in a task-unaware way, which means the learned recovery policy is a general, task-free guiding policy.

There have been some work on utilizing augmented reward for safe reinforcement learning, for example, RCPO \cite{DBLP:conf/iclr/TesslerMM19}. However, the idea of auxiliary reward is far more widely used in other RL fields than safe RL field.  In this work, the idea of using auxiliary reward is mainly inspired by Episodic Curiosity \cite{DBLP:conf/iclr/SavinovRVMPLG19}. However, the purpose of using auxiliary reward is different between Episodic Curiosity and this work: In their work, auxiliary reward is used to deal with sparse reward problem, while auxiliary reward is used to cope with adversarial phenomenon in here.

\section{Safe Reinforcement Learning Architecture}

The safe RL architecture proposed in this paper is shown in figure \ref{fig:Architecture}. We define a three-stage workflow for our architecture, namely the exploration stage, the recovery learning state, and the task training stage. Environments in the first two stages are safety-oriented, while environments in the last stage are task-oriented. In other words, agent receives only safety-related signals (cost signals) and no task-related signals (reward signals) in the first two stages, while it receives task-related signals in the last stage. Although in the last stage the agent need not to receive any cost signal, we design the task-oriented environments to return cost signals, for recording safety violations and evaluating algorithm performances. The recovery policy and safety critic are trained in a \textbf{T}ask-\textbf{U}naware way, so we refer to this architecture as TU-Recovery Architecture.

\subsection{Exploration Stage}

Exploration stage is showed as the left part of figure \ref{fig:Architecture}. The purpose of exploration stage is to learn a safety critic. An exploratory policy $\pi_{epx}$ is used to interact with the environment. During the interaction, we learn the state-action value function of the exploratory policy, according to the following Bellman equation:

\begin{equation}
    \label{equ:CostBellman}
    \begin{split}
    &Q_{exp}^{c}(s_{t},a_{t})=c_{t} \\
    & +(1-c_{t})\gamma \mathop{\mathbb{E}}\limits_{\tau \sim \pi_{exp}} \left[ Q_{exp}^{c}(s_{t+1},a_{t+1})\vert s_{t},a_{t} \right],
    \end{split}
\end{equation}
In practice, we use sampled trajectories to approximate the expectation, and train $Q_{exp}^{c}$ by minimizing the MSE loss of LHS and RHS. This equation is used in the same way in Recovery RL and its previous research \cite{DBLP:journals/corr/abs-2010-15920,DBLP:journals/corr/abs-2010-14603}.

After training, the learned function $Q_{exp}^{c}$ is considered as a safety critic. The larger value of $Q_{exp}^{c}(s,a)$ means the larger probability of the agent to enter dangerous areas after taking action $a$ in state $s$.

Another part needs to be specified in exploratory stage is the exploratory policy. In practice we use random policy as exploratory policy. The reasons of choosing random policy are: First, random policy brings strong exploratory, which helps to train a good safety critic; Second, random policy is easy to implement, and it could save more computing space and time.

\subsection{Recovery Learning Stage}

The middle part of figure \ref{fig:Architecture} illustrates how recovery learning stage works. In recovery learning stage, a recovery policy is trained to minimize the safety critic, which is trained to convergence in the previous stage.

This stage is like a traditional RL procedure, where an agent interact with an safety-oriented environment, except that the per-step reward is not directly given by the environment, but given by the safety critic. Suppose that recovery policy takes an action $a$ in state $s$, then it will receive a reward of $-Q_{exp}^{c}(s,a)$. Note that we use negative of the critic as the per-step reward. This allows us to use traditional RL algorithms to optimize recovery policy, because traditional RL algorithms learns to maximize rewards, which is equivalent to minimize safety critic in this case.

\subsection{Task Training Stage}

As shown in the right part of figure \ref{fig:Architecture}, there are four parts in task training stage -- a task policy, a recovery policy, an action decider, and a task-oriented environment.

The basic idea is to train the task policy under the supervision of the recovery policy and the action decider. Suppose that the agent is in a state $s$, the task policy propose a task action, $a_{task}$, and the recovery policy propose a recovery action, $a_{rec}$. The action decider should choose between $a_{task}$ and $a_{rec}$. Intuitively, the decider should choose task action when it is considered to be safe, otherwise choose recovery action. We follow the implementation scheme of Recovery RL, which use the safety critic $Q_{exp}^{c}$ to make decision: 

\begin{equation}
    \label{equ:ActionDecision}
    a=
    \begin{cases}
    a_{task}, \quad {Q}_{exp}^{c}(s,a_{task}) \leq d \\
    a_{rec}, \quad else
    \end{cases}
\end{equation}
where $d$ is a hyperparameter threshold.

From the view of task agent, recovery policy and action decider can be considered as part of the environment, so it can be considered that recovery policy and action decider change the dynamics of the environment to which task agent interact. This procedure corresponds to figure \ref{fig:Safe_RL}, where the safety controller is composed of the recovery policy and the action decider. Furthermore, according to equation \ref{equ:ActionDecision}, the controller only changes the proposed task action when the risk is beyond some threshold. This could be thought as a kind of \emph{hard intervention}, which replace the task actions with recovery actions in some highly dangerous areas in order to get out of these areas quickly.

\section{Learning Recover Actions through Auxiliary Reward}

\subsection{Motivation}

According to definitions from the last part, task policy learns to maximize reward, while recovery policy helps to restore from high risk areas. It could be observed that sometimes these two policies play against each other, which could cause the agent to get stuck in a small range of states. We refer to this phenomenon as \emph{adversarial phenomenon}.

Figure \ref{fig:Motivation} shows an example of adversarial phenomenon, where a robot (presented as red points) is supposed to reach the target area (presented as green circles), while avoiding collisions with an obstacle (presented as blue circles). Light orange in the figure shows the recovery zone -- agent in this zone will take recovery actions instead of task actions. Task actions and recovery actions are shown as black arrows and green arrows, respectively. Note that the task action and the recovery action are opposite in directions, which makes the agent move back and forth repeatedly at the boundary of the recovery zone.

\begin{figure}[t]
\centering
\includegraphics[width=0.9\columnwidth]{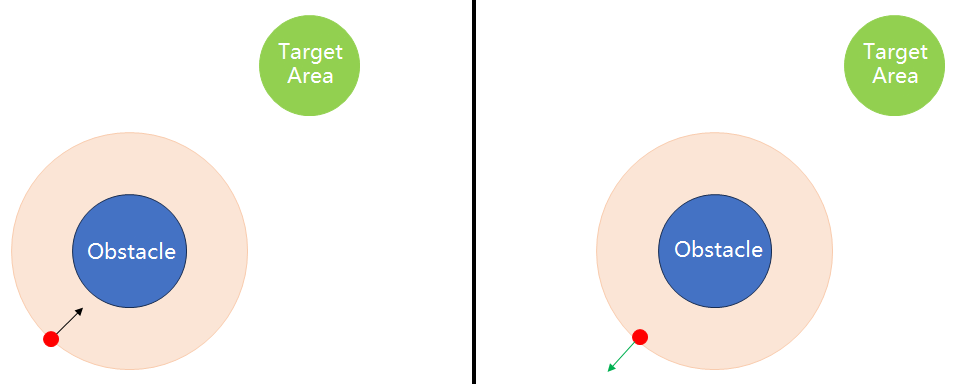}
\caption{Adversarial Phenomenon near the Border of the Recovery Zone}
\label{fig:Motivation}
\end{figure}

\begin{figure}[t]
\centering
\includegraphics[width=0.9\columnwidth]{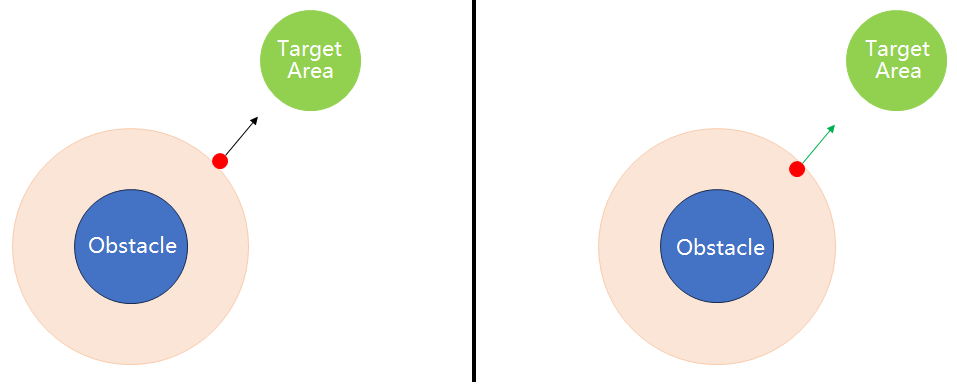}
\caption{Collaborative Phenomenon near the Border of the Recovery Zone}
\label{fig:Motivation2}
\end{figure}

Adversarial phenomenon usually happens in relatively hard tasks. For example, as shown in figure \ref{fig:Motivation}, the agent has to navigate around the obstacle, which is in the middle of the straight line between the agent and the target. However, situations could be totally different in a simple task. As figure \ref{fig:Motivation2} shows, an obstacle is placed in the opposite direction to the target, and there are no obstacles on the way for the agent to reach the target. This is the case where task action and recovery action "agree with" each other. Both task action and recovery action meet the purpose of both task policy and recovery policy -- moving towards the target and moving away from high risk area. We refer to this phenomenon as \emph{collaborative phenomenon}, because task actions and recovery actions help each other in this case.

Adversarial phenomenon can cause great performance degradation during task training stage. It makes the agent stuck around one point without further movements. In experiments we find that this phenomenon of agent stuck can happen even when the policies are trained to convergence. On the other hand, we consider collaborative phenomenon as an advantage, because it accelerates the algorithm convergence. Therefore, the goal is to mitigate the impact of adversarial phenomenon, while enhance the impact of collaborative phenomenon. We propose to utilize an auxiliary reward to meet this goal, as will be described in the following.

\subsection{Auxiliary Reward}

Consider the interactions between the agent and the environment during task training stage. At any time step $t$, the agent receives a reward signal $r_{t}$. An auxiliary reward $b_{t}$ is added to $r_{t}$, resulting in an augmented reward: 

\begin{equation}
    \label{equ:AugmentedReward}
    \hat{r_{t}}=r_{t}+b_{t}.
\end{equation}
Task policy is trained with augmented rewards, rather than original rewards.

The idea behind auxiliary reward is to force the task policy to learn recovery actions in high risk areas. Suppose that the agent is in a high risk state, then the auxiliary reward should give a high value when $a_{task}$ proposed by task policy is close to $a_{rec}$ proposed by recovery policy, otherwise give a small value. However, when the agent is in a low risk state, which means the agent is far from dangerous areas, the auxiliary reward is not supposed to work on the augmented reward, because finishing task is the only thing an agent needs to consider when it is in safe areas.

Based on the above idea, we consider the auxiliary reward of the following form:

\begin{equation}
    \label{equ:AuxiliaryReward}
    b(s,a)=\alpha f(D^{c}(s)) k(a,a_{rec}),
\end{equation}
where $\alpha$ is a positive scaling parameter, $D^{c}$ is a safety critic indicates the risk of a state (the higher risk of a state, the greater value of $D^{c}$), $f(\cdot)$ is a monotonically increasing function on $\mathbb{R}$, ranging from $[0,1]$, and $k(\cdot,\cdot)$ is a measure of how close between two actions.

Ideally, $f$ could be chosen as indicator function: 

\begin{equation}
    \label{equ:IndicatorF}
    f(x)=
    \begin{cases}
        1, \quad x>d \\
        0, \quad x \leq d.
    \end{cases}
\end{equation}
Under some suitable assumptions, a theoretical result about one-step optimization of augmented reward is proposed as follows.

\begin{proposal}
\label{prop:OneStepOptimization}
Suppose that a policy interact with an environment, with augmented reward defined by equation \ref{equ:AugmentedReward}. The auxiliary reward is defined by equation \ref{equ:AuxiliaryReward}, where $f(\cdot)$ is defined by \ref{equ:IndicatorF}, and $k(\cdot,a_{rec})$ is a kernel function maximized at $a=a_{rec}$. Then when $\alpha$ tends to be infinite, for any state $s$ with $D^{c}(s)>d$, the action that maximize one-step augmented reward is given by $a^{*}=a_{rec}$; For any state $s$ with $D^{c}(s) \leq d$, the action that maximize one-step augmented reward is given by $a^{*}=\mathop{argmax}\limits_{a} \ r(s,a)$.
\end{proposal}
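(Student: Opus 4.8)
The plan is to reason directly about the one-step augmented reward obtained by taking action $a$ in state $s$, namely $\hat{r}(s,a) = r(s,a) + \alpha f(D^{c}(s))\,k(a,a_{rec})$, and to split on whether $D^{c}(s) > d$ or $D^{c}(s) \le d$, since $f$ from \ref{equ:IndicatorF} is exactly the indicator of $\{x > d\}$. The "suitable assumptions" I would spell out are the standard ones needed to make the maxima well behaved: the action set is compact with $r(s,\cdot)$ and $k(\cdot,a_{rec})$ continuous (so maximizers exist), $r$ is bounded, and $k(\cdot,a_{rec})$ attains its maximum \emph{only} at $a = a_{rec}$, which is what a kernel such as the Gaussian/RBF kernel does.

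The case $D^{c}(s) \le d$ is immediate and needs no limit at all: there $f(D^{c}(s)) = 0$, so $\hat{r}(s,a) = r(s,a)$ for every $\alpha$, whence $\mathop{argmax}_{a}\hat{r}(s,a) = \mathop{argmax}_{a} r(s,a)$, which is the asserted $a^{*}$.

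For $D^{c}(s) > d$ we have $f(D^{c}(s)) = 1$ and $\hat{r}(s,a) = r(s,a) + \alpha k(a,a_{rec})$. Writing $K^{*} = \max_{a}k(a,a_{rec}) = k(a_{rec},a_{rec})$, fix $\alpha$ and let $a^{*}_{\alpha}$ maximize $\hat{r}(s,\cdot)$. Comparing the value at $a^{*}_{\alpha}$ with the value at $a_{rec}$ gives $r(s,a^{*}_{\alpha}) + \alpha k(a^{*}_{\alpha},a_{rec}) \ge r(s,a_{rec}) + \alpha K^{*}$, i.e. $0 \le \alpha\bigl(K^{*} - k(a^{*}_{\alpha},a_{rec})\bigr) \le r(s,a^{*}_{\alpha}) - r(s,a_{rec})$, and the right-hand side is bounded uniformly in $\alpha$ because $r$ is bounded. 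Letting $\alpha \to \infty$ forces $k(a^{*}_{\alpha},a_{rec}) \to K^{*}$; then continuity, compactness, and uniqueness of the maximizer of $k(\cdot,a_{rec})$ yield $a^{*}_{\alpha} \to a_{rec}$ by a subsequence argument. (When the action set is finite this is even cleaner: with the positive gap $\delta = \min_{a \ne a_{rec}}\bigl(K^{*} - k(a,a_{rec})\bigr)$, $a_{rec}$ becomes the exact unique maximizer once $\alpha$ exceeds $(\sup r - \inf r)/\delta$.)

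The one genuinely delicate step, and the place where the implicit hypotheses earn their keep, is the passage from $k(a^{*}_{\alpha},a_{rec}) \to K^{*}$ to $a^{*}_{\alpha} \to a_{rec}$ in the continuous-action case: if $k(\cdot,a_{rec})$ had a flat maximum, or the action set were noncompact, the maximizing actions could wander away from $a_{rec}$ while still realizing near-maximal kernel value, and the conclusion would break. So the proof really does depend on reading "kernel function maximized at $a_{rec}$" as \emph{uniquely} maximized there, together with boundedness of $r$; everything else is the bookkeeping above.
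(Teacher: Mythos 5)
Your proposal is correct, and it follows the same two-case decomposition as the paper: the low-risk case ($D^{c}(s)\le d$) is handled identically (the auxiliary term vanishes, so $\hat{r}=r$ for every $\alpha$), and in the high-risk case both arguments reduce the problem to "as $\alpha\to\infty$, maximizing $r(s,a)+\alpha k(a,a_{rec})$ amounts to maximizing $k(\cdot,a_{rec})$." The difference is in how that reduction is justified. The paper argues informally: it views $\hat{r}$ as a weighted-sum scalarization of a two-objective problem and simply asserts that as the weight $\alpha$ tends to infinity the problem "degenerates" into maximizing $k(a,a_{rec})$, leaving the "suitable assumptions" unstated. You instead give a genuine limit argument: comparing the value of a maximizer $a^{*}_{\alpha}$ with the value at $a_{rec}$ yields $0\le \alpha\bigl(K^{*}-k(a^{*}_{\alpha},a_{rec})\bigr)\le r(s,a^{*}_{\alpha})-r(s,a_{rec})$, boundedness of $r$ forces $k(a^{*}_{\alpha},a_{rec})\to K^{*}$, and compactness, continuity, and uniqueness of the kernel's maximizer give $a^{*}_{\alpha}\to a_{rec}$ (with exact equality for large $\alpha$ in the finite-action case). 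What your version buys is a precise identification of the hypotheses the paper's one-line argument quietly relies on — bounded $r$, a \emph{uniquely} maximized kernel, and a compact action set — together with the correct observation that a flat kernel maximum or a noncompact action space would break the conclusion; what the paper's version buys is brevity and an intuition (weighted scalarization) that matches how $\alpha$ is actually used as a tuning knob in the method. Your reading of the conclusion as a statement about the limit of the maximizers is also the honest way to interpret "when $\alpha$ tends to be infinite," which the paper leaves ambiguous.
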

\begin{proof}
First, consider a state $s$ with $D^{c}(s)>d$. From equation \ref{equ:IndicatorF}, the augmented reward can be written as $\hat{r}(s,a)=r(s,a)+\alpha k(a,a_{rec})$. Note that maximizing this augmented reward is equivalent to solve a multi-objective maximization problem, with $r(s,a)$ and $k(a,a_{rec})$ as objectives, using additive weighting method. So $\alpha$ can be considered as the weight of $k(a,a_{rec})$ in the problem. When $\alpha$ tends to be infinite, the problem degenerates into maximizing $k(a,a_{rec})$.

Second, consider a state $s$ with $D^{c}(s) \leq d$. It is easy to deduce that maximizing $\hat{r}(s,a)$ is equivalent to maximizing $r(s,a)$ in this case, because the auxiliary reward is always zero, according to equation \ref{equ:IndicatorF}.
\end{proof}
Proposal \ref{prop:OneStepOptimization} gives an insight of how auxiliary reward affect a policy's behavior. If the agent is in a high risk state and $\alpha$ is big enough, action that maximize one-step augmented reward will tend to be close to the recovery action; If the agent is in a low risk state, action that maximize one-step augmented reward will be the same as the one that maximize one-step original reward. Although proposal \ref{prop:OneStepOptimization} is about one-step optimization, which only works with greedy policy, it is reasonable to believe that the augmented reward can also help to enhance the performance of a long-term-concerned policy.

In practice, we use $Q_{exp}^{c}(s,a)$ instead of $D^{c}(s)$ in equation \ref{equ:AuxiliaryReward}. We will show in experiments that a state-action based function is better safety critic than a state based function.

$\left<a,a_{rec}\right>$ is used as $k(a,a_{rec})$ in equation \ref{equ:AuxiliaryReward}. Dot product between two actions indicates the "agreement" of these actions to each other. An action that is in the same direction to $a_{rec}$ shows strong agreement with $a_{rec}$, therefore tends to give a great positive value of auxiliary reward. On the other hand, an action that is in the opposite direction to $a_{rec}$ shows strong disagreement with $a_{rec}$, therefore tends to give a great negative value of auxiliary reward.

\begin{figure}[t]
\centering
\includegraphics[width=0.9\columnwidth]{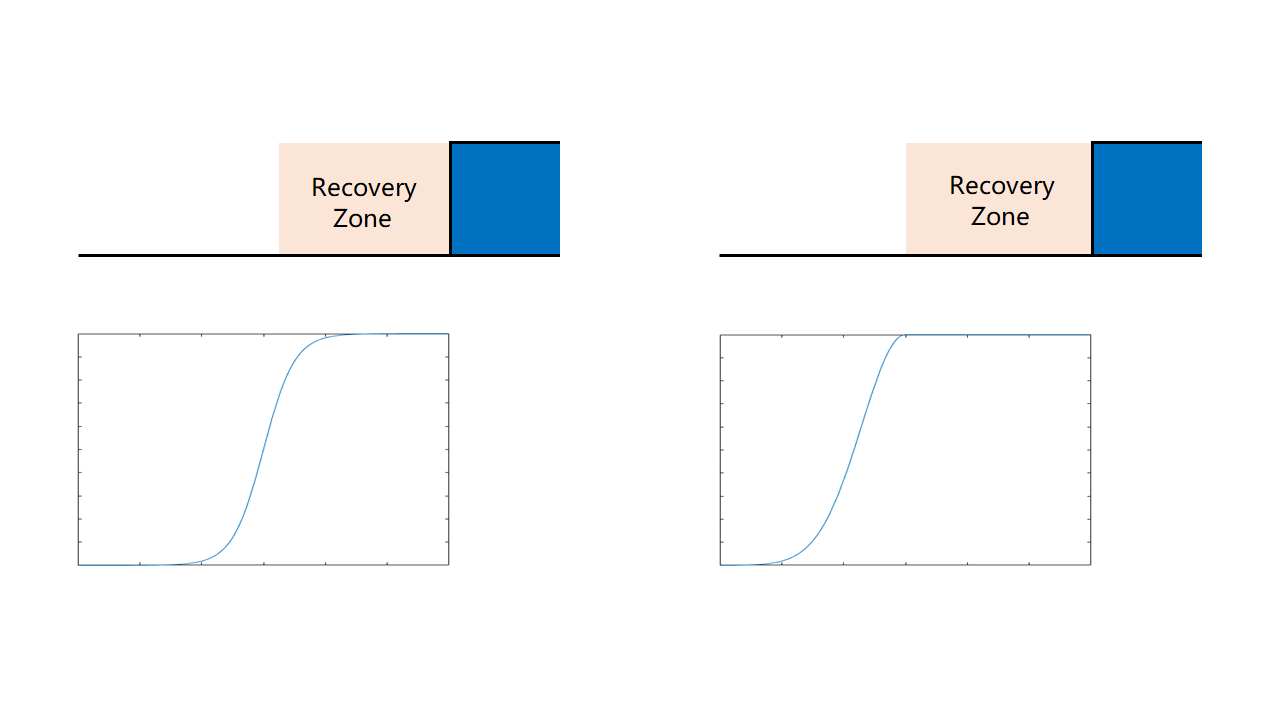}
\caption{Example function graph of $f$. Left: SL function. Right: GC function.}
\label{fig:FDiag}
\end{figure}

Moreover, continuous and smoothed versions of indicator function, rather than the original indicator function is used as $f$. We consider the following two functions: 

\begin{itemize}
    \item Sigmoid function with linear transformation, with hyperparameters $a$ and $b$: 
    \begin{equation}
        f(x)=\frac{1}{1+exp(-(ax+b))}
    \end{equation}
    \item Piece-wise Gaussian and constant function, with $\sigma$ as hyperparameter:
    \begin{equation}
        f(x)=
        \begin{cases}
            1, \quad x>d \\
            exp(-\frac{(x-d)^{2}}{\sigma^{2}})
        \end{cases}
    \end{equation}
\end{itemize}
For simplicity, we refer to the first function as \emph{SL function}, and refer to the second function as \emph{GC function}. Figure \ref{fig:FDiag} shows examples of the two functions. The main reason of using these functions rather than indicator function is to avoid intense value change around some states, and make the learning procedure more stable.

Auxiliary rewards can be thought as a kind of \emph{soft intervention}. It change the task policy's behavior in a gradual manner, which is different from how hard intervention affect the task policy.

\section{Experiments}

\subsection{Environment}

\begin{figure}[t]
\centering
\includegraphics[width=0.9\columnwidth]{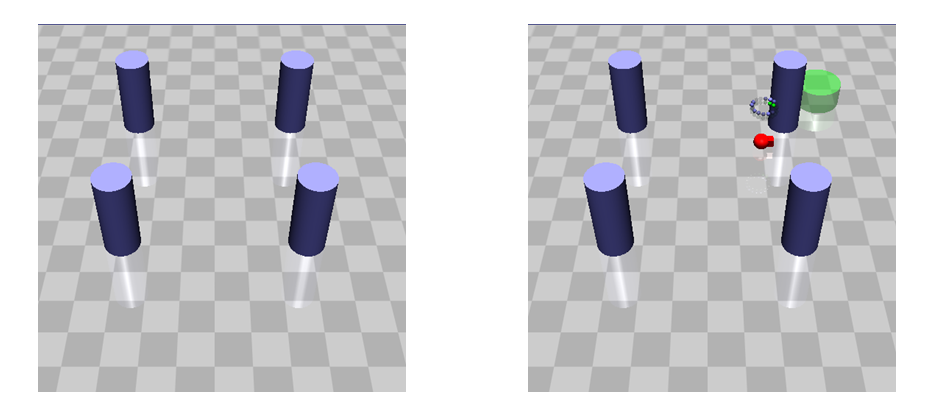}
\caption{Environment Used in Experiments. Left: Obstacles layout. Right: Environment layout at one moment. Obstacles are represented by blue pillars, robot is represented by a red sphere, and target area is represented by green pillars.}
\label{fig:Env}
\end{figure}

The experiment environment is illustrated in figure \ref{fig:Env}. In practice, we construct this environment based on safety-gym \cite{ray2019benchmarking}. It is worth emphasizing that, the start positions and target positions are randomly initialized during training stage.

\subsection{Metrics}

The randomness in start positions and target positions may result in high variance results, so the metrics for algorithms performance should be carefully chosen.

Rewards and costs are the values that most directly show the performance of algorithms. We use cumulative rewards and costs per 1000 steps as basic metrics.

For two algorithms, if one is greater in rewards while less in costs than the other, we say one \emph{dominates} the other. However, sometimes there are no dominance between two algorithms. In these case, we compare two algorithms by the cumulative \textbf{R}eward-to-\textbf{C}ost ratio, abbreviated as \emph{RC-ratio}. During a long period of training procedure, the ratio between the \textbf{M}aximum cumulative \textbf{R}eward and the maximum cumulative \textbf{C}ost during training, abbreviated as \emph{MRC-ratio}, is used. The ratio between the \textbf{A}verage cumulative \textbf{R}eward and the average cumulative \textbf{C}ost during training, abbreviated as \emph{ARC-ratio}, is also used as a metric.

\subsection{Results}

\noindent\textbf{a. Results of Task Training Stage.}
We test four algorithms -- an unconstrained method, TU-Recovery method and TU-Recovery method with auxiliary reward. The learning curves for cumulative rewards and costs are shown in figure \ref{fig:AlgorithmResult}. We also report the MRC-ratio and ARC-ratio, as shown in table \ref{tab:AlgorithmResult}. All policies are trained using SAC algorithm \cite{DBLP:conf/icml/HaarnojaZAL18,DBLP:journals/corr/abs-1812-05905}, and all results are the averages of 5 runs.

\begin{figure}[ht]
\centering
\includegraphics[width=0.9\columnwidth]{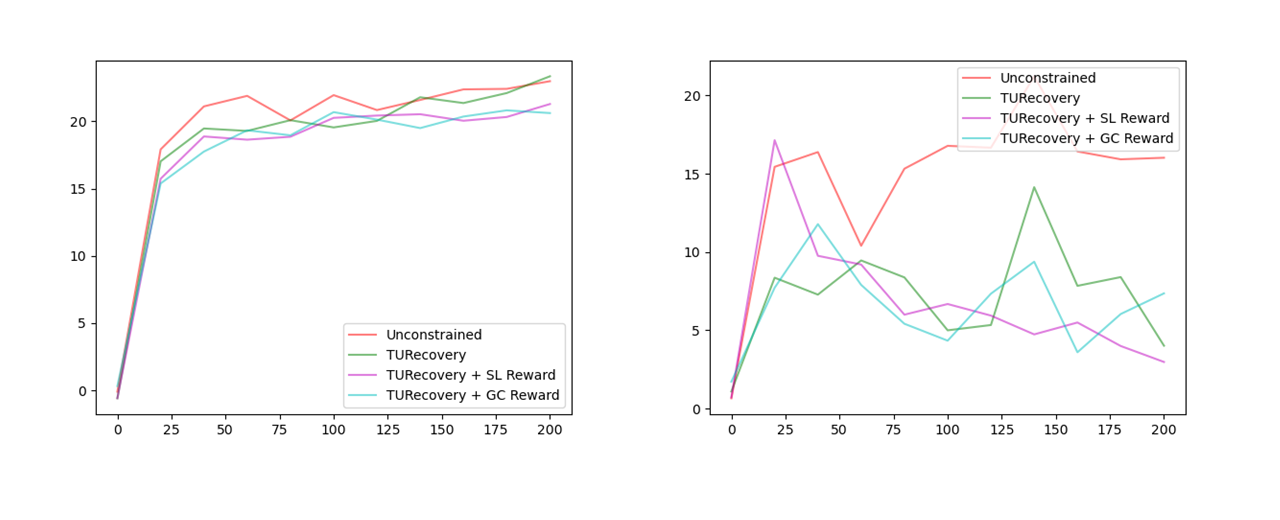}
\caption{Cumulative rewards and costs for unconstrained method and our methods. Left: Learning curves of cumulative rewards. Right: Learning curves of cumulative costs.}
\label{fig:AlgorithmResult}
\end{figure}

\begin{table}[ht]
\caption{Task Training Results for Unconstrained Method and Our Methods.}
\centering
\begin{tabular}{lll}
    \hline
    Algorithm & MRC-ratio & ARC-ratio \\
    \hline
    Unconstrained & 0.800 & 1.321 \\
    TU-Recovery & 1.232 & 2.564 \\
    TU-Recovery + SL Reward & 1.054 & \textbf{2.675} \\
    TU-Recovery + GC Reward & \textbf{1.421} & 2.669 \\
    \hline
\end{tabular}
\label{tab:AlgorithmResult}
\end{table}

It can be easily seen from figure \ref{fig:AlgorithmResult} that, TU-Recovery methods (including the ones with auxiliary rewards) significantly reduce constraint violations comparing to unconstrained method. Results from table \ref{tab:AlgorithmResult} also indicate that our methods outperform unconstrained method. Furthermore, table \ref{tab:AlgorithmResult} shows that auxiliary rewards can improve the performance, especially for the GC reward (4th row in table \ref{tab:AlgorithmResult}), with which the algorithm outperforms the one without auxiliary reward (2nd row in \ref{tab:AlgorithmResult}) in both MRC-ratio and ARC-ratio.

\noindent\textbf{b. How Do Auxiliary Reward Improve Task Policy?}
To give an insight of how auxiliary rewards affect the behaviors of task policy, we test the trained task policies, and measure their performance by cumulative reward, cumulative cost, and RC-ratio. In this experiment, interventions of recovery policy is disabled, so that all results reflect the pure performance of trained task policies. Results are shown in table \ref{tab:TaskPolicy}. For each task policy, 10 different seeds is used to conduct 10 runs, so these results are the averages of 10.

\begin{table}[ht]
\caption{Performance of The Trained Task Policies.}
\centering
\begin{tabular}{llll}
    \hline
    Algorithm & Reward & Cost & RC-ratio \\
    \hline
    Unconstrained & 22.136 & 15.84 & 1.397 \\
    TU-Recovery & \textbf{23.682} & 8.0 & 2.960 \\
    TU-Recovery + SL Reward & 22.212 & \textbf{4.32} & \textbf{5.142} \\
    TU-Recovery + GC Reward & 22.590 & 5.8 & 3.895 \\
    \hline
\end{tabular}
\label{tab:TaskPolicy}
\end{table}

It is clear that task policies trained with auxiliary reward (3rd, 4th rows in \ref{tab:TaskPolicy}) is significantly less in constraint violations, while slightly less in reward gaining, compared to task policy trained with original reward (2nd row in \ref{tab:TaskPolicy}), resulting in higher RC-ratio (as shown in the 4th column).

\subsection{Ablation Experiments}

\noindent\textbf{a. The Reason of Using Exploratory Policy.}
To give a justification for using Q function of exploratory policy as safety critic, we conduct an experiment to compare different safety critics. We consider two other safety critics in addition to $Q_{exp}^{c}$, as will described in the following.

A simple plan to build a safety critic is to directly train the recovery policy to minimize expected discounted cost, then use Q function of this recovery policy, denoted by $Q_{d\_rec}^{c}$, as the safety critic. One benefit of this plan is that, exploratory stage is combined with recovery learning stage, so the three-stage training workflow is simplified to a two-stage procedure.

We also consider using the distance from the robot to an obstacle as safety critic. There are more than one obstacles in the environment, so the minimum distance from the robot to these obstacles, denoted by $D_{min}^{c}$, is used. Note that this is a handcrafted safety critic, and normally the robot has no knowledge about this value, so this critic only acts as an ideal state-based safety critic. When using this safety critic, only recovery learning stage and task training stage need to be executed.

\begin{figure}[ht]
\centering
\includegraphics[width=0.9\columnwidth]{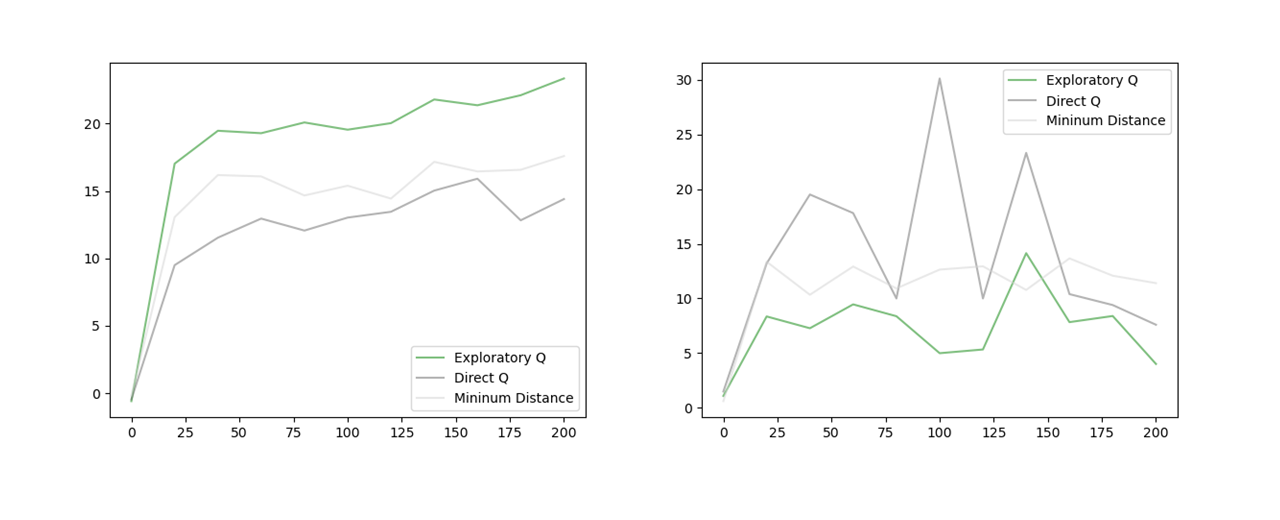}
\caption{Learning Curves of TU-Recovery with Different Safety Critics. Left: Cumulative reward curves. Right: Cumulative cost curves.}
\label{fig:SafetyCritic}
\end{figure}

\begin{figure}[ht]
\centering
\includegraphics[width=0.9\columnwidth]{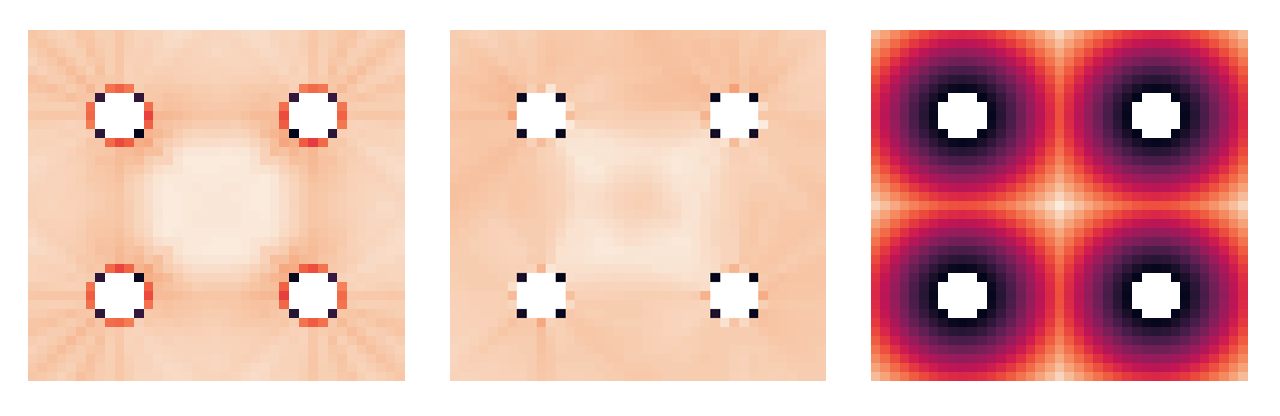}
\caption{Heatmaps of Different Safety Critics. Left: Q function of exploratory policy. Middle: Q function of directly trained recovery policy. Right: Minimum distance from current position to an obstacle.}
\label{fig:HeatMaps}
\end{figure}

We compare the task learning curve of TU-Recovery with three different safety critics, as shown in figure \ref{fig:SafetyCritic}. The conclusion is that the one using exploratory policy dominates the one using minimum distance, and the one using minimum distance dominates the one using directly recovery policy.

We give an insight about the above conclusion by providing the heatmaps of three safety critics, as figure \ref{fig:HeatMaps} shows. All heatmaps about Q functions are drawn by masking the actions to zeros. By comparison between the heatmaps of $Q_{exp}^{c}$ (left part in figure \ref{fig:HeatMaps}) and $Q_{d\_rec}^{c}$ (middle part in figure \ref{fig:HeatMaps}), it makes sense that $Q_{exp}^{c}$ is a better safety critic than $Q_{d\_rec}^{c}$: $Q_{exp}^{c}$ shows homogeneously decreasing in all directions as the distance from the obstacle increases, while $Q_{d\_rec}^{c}$ does not. According to this point of view, it seems that $D_{min}^{c}$ should be better safety critic than $Q_{exp}^{c}$. Why the results from figure \ref{fig:SafetyCritic} contradict this view? We argue that this is because $Q_{exp}^{c}$ is state-action based critic, while $D_{min}^{c}$ is state based critic. A state-action based critic takes the actions into account, and is tend to give more reasonable values than a state based critic. Therefore, the results also justify using Q function as safety critic.

\noindent\textbf{b. Necessity of Using Hard Intervention.}
We conduct a experiment to see whether hard intervention is necessary. TU-Recovery with both hard intervention and soft intervention (auxiliary reward) is compared to its soft-intervention-only counterpart. The result is shown in figure \ref{fig:HIResult}. It can be seen that soft intervention help the task policy to learn to be safe gradually, resulting in better performance than unconstrained method, but a hard intervention is still necessary to ensure safety during the whole task training.

\begin{figure}[ht]
\centering
\includegraphics[width=0.9\columnwidth]{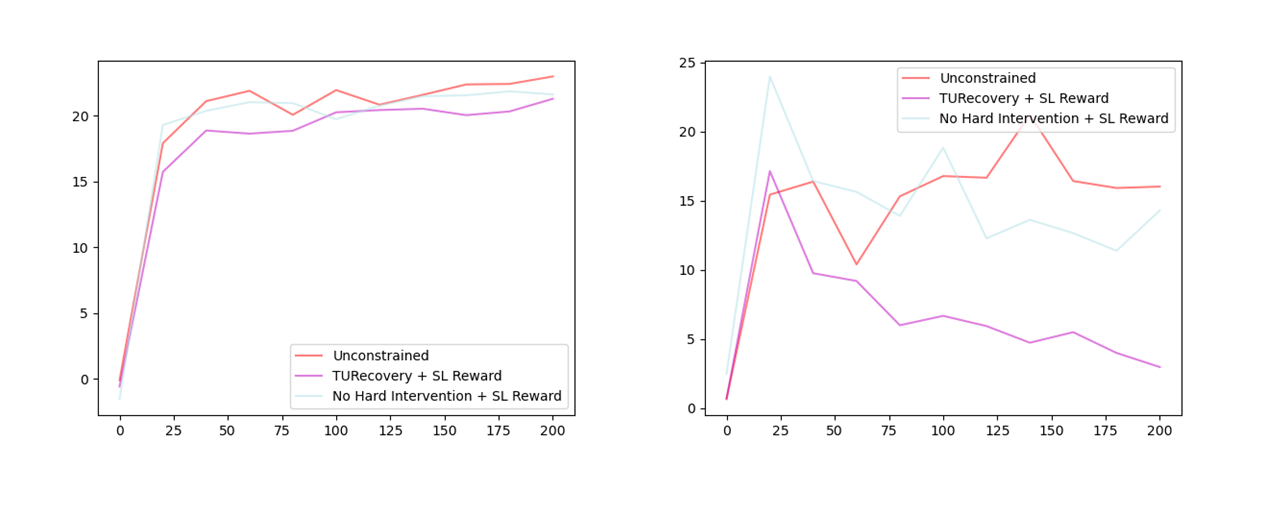}
\caption{Ablations of Hard Intervention. Left: Cumulative reward curves. Right: Cumulative cost curves.}
\label{fig:HIResult}
\end{figure}

\section{Conclusion}

We propose a three-stage framework for safe reinforcement learning, named TU-Recovery Architecture. The framework can construct safety constraints by learning, avoiding handcrafting safety constraints. It is demonstrated that our framework outperformed unconstrained counterpart in task training. Adversarial phenomenon may degrade the performance during task training, so auxiliary rewards is proposed to mitigate this issue. Experiments show that auxiliary rewards can efficiently help the task policy to learn recovery actions.


\section{Acknowledgements}

We express our acknowledgement to Zhejiang University and all people provided us with technical supports and helpful suggestions.

\bigskip

\appendix

\nobibliography*

\nocite{sutton2018reinforcement}
\nocite{DBLP:conf/nips/SuttonMSM99}
\nocite{DBLP:journals/corr/BrockmanCPSSTZ16}

\nocite{DBLP:journals/corr/AmodeiOSCSM16}
\nocite{DBLP:journals/jmlr/GarciaF15}
\nocite{DBLP:journals/corr/AchiamHTA17}
\nocite{DBLP:conf/iclr/YangRNR20}
\nocite{DBLP:conf/nips/YuYKW19}
\nocite{DBLP:conf/icml/WachiS20}
\nocite{DBLP:journals/corr/SaundersSSE17}

\nocite{DBLP:conf/taros/JocasZKGS22}
\nocite{DBLP:conf/nips/0010VR20}
\nocite{DBLP:journals/jmlr/ChowGJP17}
\nocite{DBLP:journals/neco/ShenTSO14}
\nocite{DBLP:conf/icra/ChenLZXD0Z21}
\nocite{DBLP:journals/corr/MnihKSGAWR13}
\nocite{DBLP:journals/corr/LillicrapHPHETS15}
\nocite{DBLP:journals/corr/SchulmanWDRK17}
\nocite{DBLP:conf/icml/SchulmanLAJM15}
\nocite{DBLP:journals/corr/abs-2205-10330}

\bibliography{aaai24}

\end{document}